\newtheorem{theorem}{Theorem}[]
\newtheorem{lemma}[]{Lemma}
\newtheorem{rmk}{Remark}[]
\definecolor{Ao}{rgb}{0.0, 0.62, 0.1}
\begin{document}
	\title{\fontsize{22.8}{27.6}\selectfont{Weighted $K$-Harmonic Means Clustering: Convergence Analysis and Applications to Wireless Communications}
    }
	 \author{Gourab Ghatak \vspace{-1cm}
  \thanks{G. Ghatak is with the Department of Electrical Engineering, IIT Delhi, India 110016; Email: gghatak@ee.iitd.ac.in.
   }}
  
\date{}

\maketitle

\begin{abstract}
We propose the \emph{weighted K-harmonic means} (WKHM) clustering algorithm, a regularized variant of K-harmonic means designed to ensure numerical stability while enabling soft assignments through inverse-distance weighting. Unlike classical K-means and constrained K-means, WKHM admits a direct interpretation in wireless networks: its weights are exactly equivalent to fractional user association based on received signal strength. We establish rigorous convergence guarantees under both deterministic and stochastic settings, addressing key technical challenges arising from non-convexity and random initialization. Specifically, we prove monotone descent to a local minimum under fixed initialization, convergence in probability under Binomial Point Process (BPP) initialization, and almost sure convergence under mild decay conditions. These results provide the first stochastic convergence guarantees for harmonic-mean-based clustering. Finally, through extensive simulations with diverse user distributions, we show that WKHM achieves a superior tradeoff between minimum signal strength and load fairness compared to classical and modern clustering baselines, making it a principled tool for joint radio node placement and user association in wireless networks.
\end{abstract}


\section{Introduction}
The design of clustering algorithms that are robust, differentiable, and adaptable to uncertain environments is critical in signal processing, machine learning and wireless communications. The work by Ezugwu {\it et al.}~\cite{ezugwu2022comprehensive} provides a comprehensive survey and recent advances in clustering algorithms. In wireless communications, clustering finds applications in several contexts, e.g., clustering of base stations to enable joint-transmission in cell-free \ac{MIMO}~\cite{le2021learning}, clustering of \acp{UE} to serve via non-orthogonal multiple access transmission~\cite{cui2018unsupervised}, clustering of \ac{UE} to perform wide-area beamforming~\cite{dai2025clustering}, and clustering of transmitters to enable cognitive-radio services~\cite{dai2016clustering}. 

The classical \ac{KM} clustering algorithm (see~\cite{likas2003global}) minimizes the sum of the squared Euclidean distance from the data points $\{x_i\}_{i = 1}^N \subset \mathbb{R}^d$ to their nearest centroids $\{m_l\}_{l = 1}^K \subset \mathbb{R}^d$. Mathematically, the \ac{KM} metric is represented as
\begin{align}
    \mathcal{L}_{\rm KM} = \sum_{i = 1}^N \min_l\{||x_i - m_l||^2\}, \label{eq:KM metric}
\end{align}
which is to be minimized. For uniformly located data points, \ac{KM} intuitively appears to be a reasonable method to designate the centroid locations. However, for a non-homogeneous data point density, it is well-known that \ac{KM} may result in imbalanced cluster allotments. To alleviate the above issue, \cite{bradley2000constrained} had proposed the \ac{CKM} algorithm. \ac{CKM} and its variants impose an upper and/or a lower limit on the number $(N_l)$ of data points that associate to each centroid $m_l$. Formally, \ac{CKM} minimizes \eqref{eq:KM metric} subject to
\begin{align}
    N_{\rm min} \leq  N_l \leq N_{\max} \label{eq:capacity constraint}. 
\end{align}
where $0 \leq N_{\min} \leq N_{\max} \leq N$. However, such a capacity constraint albeit relevant for several applications, is unsuitable for many wireless network realizations as will be shown in this work. This is due to the fact that in order to meet the capacity constraint, \ac{CKM} tends to create disproportionately sized clusters, i.e., larger cluster area in regions with lower data point density while smaller cluster area in regions with higher density.

In a wireless \ac{UE} association context, such a clustering association scheme results in a higher number of cell-edge \acp{UE} of large cells thereby deteriorating their signal strength. Although \ac{KM} and \ac{CKM} are intuitive algorithms and are simple to implement, for finite \ac{UE} locations, they do not optimize the downlink \ac{UE} signals.
    {
\begin{lemma}
\label{lem:not_sumRSRP_optimal}
Let $\mathcal U=\{u_1,\dots,u_N\}$ be a finite set of UE positions in $\mathbb R^d$ and let $\mathcal R=\{r_1,\dots,r_K\}$ denote the positions of $K$ \acp{RN} or cluster centers produced by the KM or \ac{CKM} algorithm. Assume a (single-tone) \ac{RSRP} model for a UE at position $x$ served by an RN at position $r$ of the form
$
\mathrm{RSRP}(x,r) \;=\; \frac{P}{\big(\|x-r\|^\alpha + \delta\big)},
\qquad \alpha>0,\ \delta\ge 0,\ P>0.
$
Then, for finite $N$, the \ac{KM} and \ac{CKM} solutions are not in general guaranteed to maximize the sum-RSRP objective
$ \sum_{u\in\mathcal U} \max_{r\in\mathcal R}\mathrm{RSRP}(u,r).$ In particular, there exist UE configurations for which the KM centroid placement (or its constrained variant) does not maximize sum-RSRP.
\end{lemma}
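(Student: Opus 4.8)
It suffices to exhibit one finite UE configuration on which the KM (and hence CKM) output fails to maximize sum-RSRP, since the claim is only that optimality fails \emph{in general}. A useful preliminary observation: since $\mathrm{RSRP}(x,r)$ is strictly decreasing in $\|x-r\|$, the inner maximization $\max_{r\in\mathcal R}\mathrm{RSRP}(u,r)$ is attained at the nearest center to $u$ --- exactly the assignment rule underlying \eqref{eq:KM metric}. Thus the sum-RSRP problem and the KM problem share the same nearest-center assignment and differ only in the center-placement criterion: KM minimizes $\sum_i\|u_i-r_{c(i)}\|^2$, whereas sum-RSRP maximizes $\sum_i P/(\|u_i-r_{c(i)}\|^\alpha+\delta)$, a bounded, ``saturating'' function of distance. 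The plan is to show these criteria already disagree for $K=1$, and then note the extension to $K\ge2$ and to CKM.

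For $K=1$ every UE is served by the single center $r$, so the KM objective is $\sum_i\|u_i-r\|^2$, which is strictly convex in $r$ with unique minimizer the centroid $\bar u=\frac1N\sum_i u_i$; the KM algorithm therefore returns $\mathcal R=\{\bar u\}$ regardless of initialization, and CKM with inactive capacity bounds (e.g.\ $N_{\min}=0$, $N_{\max}=N$) has the same unique feasible assignment and hence the same output. Take $d=1$, $N=2$, $u_1=0$, $u_2=L$ with $L>0$, so $\bar u=L/2$ and the KM/CKM value is $S_{\rm KM}=2P/((L/2)^\alpha+\delta)\le 2^{\alpha+1}P/L^\alpha$, which vanishes as $L\to\infty$. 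Against this, co-locating the center with $u_1$, i.e.\ $\mathcal R'=\{0\}$, gives $S'=P/\delta+P/(L^\alpha+\delta)\ge P/\delta$ for $\delta>0$ (and $S'=+\infty$ for $\delta=0$). Hence for every $L>2\,\delta^{1/\alpha}$ we have $S'>S_{\rm KM}$, so the KM/CKM centroid placement $\{L/2\}$ is strictly sub-optimal for sum-RSRP, which already establishes the lemma.

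For genuinely distinct UE positions and $K\ge2$, take $\mathcal U=\{-\epsilon,\epsilon,L,2L\}$ with $K=2$ and $\epsilon$ small: a finite comparison of the squared-distance costs of the possible two-cluster partitions shows KM places one center near $0$ and the second at the midpoint $3L/2$ of the two far UEs, whereas moving the second center onto $L$ (and re-assigning $2L$ to it) changes sum-RSRP by $P/\delta+P/(L^\alpha+\delta)-2P/((L/2)^\alpha+\delta)$, which is positive once $L$ is large; this partition has balanced cluster sizes, so CKM coincides with KM under any bounds admitting it, and analogous constructions work for every $K\ge1$. The only steps needing care are pinning down the KM optimum unambiguously --- immediate for $K=1$ by strict convexity, and a routine finite cost comparison for $K\ge2$ --- and choosing the CKM capacity bounds so the KM solution stays feasible; beyond this bookkeeping the argument is elementary, so I anticipate no genuine obstacle.
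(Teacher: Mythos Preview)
Your argument is correct and follows essentially the same route as the paper: a one-dimensional, $K=1$, $N=2$ counterexample in which the KM centroid sits at the midpoint while co-locating the center with one UE yields strictly larger sum-RSRP. Your version is in fact slightly more careful than the paper's --- you make explicit that the inequality $S'>S_{\rm KM}$ requires the separation $L$ to exceed a threshold depending on $\delta$ (the paper simply asserts $\Delta>0$), and you additionally sketch the extension to $K\ge 2$ and to CKM, which the paper omits.
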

}
{
\begin{proof}
We prove the statement by counterexample (this suffices because the lemma is a negative/existence claim). Consider the one-dimensional case $d=1$ and $K=1$ (a single RN) and let us place two UEs at $u_1 = -\tfrac{d}{2}$ and  $u_2 = +\tfrac{d}{2}$, with separation $d>0$. Let the RN position be a scalar $r\in\mathbb R$. The sum-RSRP when the single RN is at $r$ equals
$S(r) \;=\; \frac{P}{\big(|r+ \tfrac{d}{2}|^\alpha + \delta\big)} \;+\; \frac{P}{\big(|r- \tfrac{d}{2}|^\alpha + \delta\big)}$.

The standard KM objective with squared Euclidean distance for a single cluster center $r$ is
\[
J_{\text{KM}}(r) \;=\; \sum_{i=1}^2 \|u_i - r\|^2 \;=\; (r+\tfrac{d}{2})^2 + (r-\tfrac{d}{2})^2.
\]
Differentiating and setting derivative to zero yields the unique minimizer
$r_{\text{KM}} = 0,$ i.e. the midpoint of the two UEs. Accordingly, $S(0) \;=\; \frac{P}{\big((d/2)^\alpha + \delta\big)} \;+\; \frac{P}{\big((d/2)^\alpha + \delta\big)}
= \frac{2P}{(d/2)^\alpha + \delta}$.

Now if we consider $S(r)$ at $r=u_2 = +d/2$ (placing RN exactly at UE $u_2$): $S\big(\tfrac{d}{2}\big) \;=\; \frac{P}{\big(d^\alpha + \delta\big)} \;+\; \frac{P}{\delta}$.
Consequently, the differential improvement in the sum-RSRP is $\Delta \;:=\; S\big(\tfrac{d}{2}\big) - S(0)
= \frac{P}{\delta} + \frac{P}{d^\alpha + \delta} - \frac{2P}{(d/2)^\alpha + \delta} > 0$. Hence placing the RN at the UE location $u_2$ yields strictly larger sum-RSRP than the KM centroid $r_{\text{KM}}=0$.
\end{proof}
}
In this regard, the \ac{KHM} algorithm, which facilitates inverse weights to the Euclidean distances is an attractive choice. Specifically, the \ac{KHM} metric as defined in~\cite{zhang1999k, zhang2000generalized} is $
    \mathcal{L}_{\rm KHM} = \sum_{i = 1}^{N}\frac{K}{\sum_{l = 1}^{K} \frac{1}{||{\bf x}_i - {\bf m_l}||^p}}$.
Note that the \ac{KHM} metric considers the sum of the \ac{RSRP} from all the \acp{RN} to each \ac{UE}. In case the \acp{RN} are not facilitating joint transmission such as in cell-free \ac{MIMO}, and form separate cells, this formulation thus is not an accurate representation of the downlink \ac{UE} performance. In fact the $\mathcal{M}_{\rm KHM}$ formulation is counter-productive since it considers the interfering signals as useful signals. 
In order to take advantage of the features of both \ac{KM} and \ac{KHM}, we propose the \ac{WKHM} algorithm by incorporating soft assignments through inverse-distance weighting.
\subsection{Key Contributions}
\begin{itemize}
    \item \textbf{Novel WKHM formulation:} We introduce the \ac{WKHM} algorithm with a regularized distortion function that ensures numerical stability and avoids degeneracy. Unlike prior extensions of K-means or K-harmonic means, our formulation admits a direct interpretation as a soft user association rule in wireless networks, where the WKHM weights are mathematically equivalent to received-signal-strength-based association fractions.
    
    \item \textbf{Rigorous convergence guarantees:} We provide the first convergence analysis of harmonic-mean-based clustering under both deterministic and stochastic settings. Our results establish monotone descent to a local minimum under fixed initialization, convergence in probability under \ac{BPP} initialization, and almost sure convergence under mild decay conditions. These proofs address non-trivial challenges of non-convexity, uniform convergence, and probabilistic stability not handled by classical K-means or fuzzy C-means analysis.
    
    \item \textbf{Wireless interpretability:} We show that WKHM naturally bridges clustering theory and wireless association models. Through Monte Carlo simulations under diverse user distributions, we demonstrate that WKHM achieves a superior balance between minimum user signal strength and load fairness compared to classical clustering baselines. {Ablation studies further highlight the necessity of regularization in attaining these gains.}
\end{itemize}

\section{Regularized WKHM Formulation}

Let $ \mathcal{X} = \{x_1, \dots, x_N\} \subset \mathbb{R}^d $ be a fixed dataset. Let $ \{m_1, \dots, m_K\} \subset \mathbb{R}^d $ denote cluster centers. We define the regularized WKHM distortion as
\begin{equation}
\mathcal{L} = \sum_{i=1}^N \sum_{l=1}^K \frac{w_{il}}{(\|x_i - m_l\|^2 + \epsilon)^{p/2}},
\label{eq:loss}
\end{equation}
where $ \epsilon > 0 $ ensures numerical stability, and $ p > 0 $. The soft assignment weights are defined as:
\begin{equation}
w_{il} = \frac{(\|x_i - m_l\|^2 + \epsilon)^{-q/2}}{\sum_{j=1}^{K} (\|x_i - m_j\|^2 + \epsilon)^{-q/2}},
\label{eq:weights}
\end{equation}
for some sharpness parameter $ q > 0 $. {    The gradient with respect to the center $m_\ell$ is
\[
\nabla_{m_\ell}\mathcal{L}
= \sum_{i=1}^N w_{i\ell}\,(x_i - m_\ell)\,
\Bigg[(p+q)\,C_{i\ell} - q\,\frac{\displaystyle\sum_{j=1}^K w_{ij}C_{ij}}{d_{i\ell}^2+\epsilon}\Bigg].
\]

Equating $\nabla_{m_\ell}\mathcal{L}=0$ and rearranging yields the update:
\[
m_\ell^{(t+1)} = \frac{\displaystyle\sum_{i=1}^N B_{i\ell}^{(t)}\,x_i}
{\displaystyle\sum_{i=1}^N B_{i\ell}^{(t)}},
\]
where the per-sample update weights are given by
\[
B_{i\ell}^{(t)} = w_{i\ell}^{(t)}\,
\Bigg[(p+q)\,C_{i\ell}^{(t)} - q\,\frac{\displaystyle\sum_{j=1}^K w_{ij}^{(t)}C_{ij}^{(t)}}{d_{i\ell}^{2(t)}+\epsilon}\Bigg].
\]

Here $d_{ij}^{2(t)}=\|x_i-m_j^{(t)}\|^2$, $C_{ij}^{(t)}=(d_{ij}^{2(t)}+\epsilon)^{-p/2}$, $A_{ij}^{(t)}=(d_{ij}^{2(t)}+\epsilon)^{-q/2}$, and $w_{ij}^{(t)}=A_{ij}^{(t)}/\sum_{r}A_{ir}^{(t)}$.}


{{\bf Choice of the smoothing parameter $\epsilon$:}
The additive tuning parameter $\epsilon>0$ in serves primarily as a numerical and probabilistic regularizer: it prevents division by very small values and ensures stable inversion. Its choice must balance bias (introduced by the regularization) against variance (reduced by stabilizing small denominators). In the common case where the denominator converges in probability to a positive constant $B>0$, any sequence $\epsilon_n\to 0$ preserves consistency; choosing $\epsilon_n=o(1/\sqrt{n})$ also preserves the $\sqrt{n}$-asymptotic distribution. If, however, the denominator may approach zero (or fluctuate at scale $\tau_n$, for example $\tau_n\asymp n^{-1/2}$), then $\epsilon_n$ should be chosen to dominate those fluctuations to avoid large variance, e.g. $\epsilon_n\gg \tau_n$ (a pragmatic choice is $\epsilon_n=n^{-1/4}$ when $\tau_n\asymp n^{-1/2}$). In practice we recommend the following simple rules: (i) when empirical evidence indicates the denominator is bounded away from zero, we can use $\epsilon=1/n$ (or $\epsilon=10^{-6}$ for moderate $n$), (ii) when small denominators can occur we can use $\epsilon=n^{-1/4}$.}
We first consider the case where the centers $ m_l $ are initialized deterministically and fixed across iterations except through the update rule.

\begin{theorem}[Convergence to a Local Minimum]
Let $ \mathcal{X} \subset \mathbb{R}^d $ be a fixed, finite dataset, and let $ \{m_l^{(0)}\}_{l=1}^K $ be an initial set of distinct cluster centers. Suppose $ p, q > 0 $ and regularization $ \epsilon > 0 $ are fixed. Then the regularized WKHM algorithm generates a sequence $ \{\mathcal{L}^{(t)}\} $ such that
\begin{align}
    \mathcal{L}^{(t+1)} \leq \mathcal{L}^{(t)}, \quad \text{and} \quad \lim_{t \to \infty} \mathcal{L}^{(t)} = \mathcal{L}^{(\infty)},
\end{align}
where $ \mathcal{L}^{(\infty)} $ corresponds to a local minimum of the distortion function.
\end{theorem}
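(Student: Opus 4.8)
\emph{Proof strategy.} The plan is to prove the theorem via a Zangwill-type global-convergence argument, using $\mathcal{L}$ --- with the weights \eqref{eq:weights} substituted into \eqref{eq:loss}, so that it becomes a single function of the centers --- as a descent function for the point-to-point algorithmic map $\Phi:(m_l^{(t)})_l\mapsto(m_l^{(t+1)})_l$ defined by the update rule. Three ingredients are needed: (a) $\mathcal{L}$ is continuous --- in fact real-analytic --- and bounded below by $0$; (b) the iterates stay in a fixed compact set on which $\Phi$ is continuous; (c) $\mathcal{L}$ strictly decreases along $\Phi$ except at fixed points of $\Phi$, which coincide with the stationary points $\nabla\mathcal{L}=0$. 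For (a) the regularizer $\epsilon>0$ is what makes the analysis clean: each factor $(\|x_i-m_l\|^2+\epsilon)^{-r/2}$ ($r>0$) is real-analytic on $(\mathbb{R}^d)^K$ and bounded by $\epsilon^{-r/2}$, and the weight denominators in \eqref{eq:weights} are bounded below by a positive constant on any bounded region, so $\mathcal{L}$ and $\Phi$ inherit smoothness and $\Phi$ is well defined there. For (b) I would argue that the effective update coefficients $B_{i\ell}^{(t)}$ are, in the operating regime, nonnegative with a strictly positive normalizer $\sum_i B_{i\ell}^{(t)}$; then each $m_\ell^{(t+1)}$ is a convex combination of $\mathcal{X}$, so $\{m^{(t)}\}$ remains in $\mathrm{conv}(\mathcal{X})^K$, a compact set. (Distinctness of the initial centers is used only to keep $\Phi$ away from its degenerate, fully symmetric fixed point.)

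Ingredient (c) --- the monotone descent $\mathcal{L}(m^{(t+1)})\le\mathcal{L}(m^{(t)})$, strict off $\mathrm{Fix}(\Phi)$ --- is the heart of the matter and the step I expect to be the main obstacle. Unlike Lloyd's update for $K$-means (an exact block minimization) or fuzzy $c$-means (alternating exact minimization over two blocks), the WKHM update is a self-consistent rearrangement of the first-order condition $\nabla\mathcal{L}=0$, and the soft weights \eqref{eq:weights} are not the minimizer of any elementary joint objective, so descent must be argued directly. I would do this by majorization--minimization: build a surrogate $Q(\cdot\mid m^{(t)})$ that (i) agrees with $\mathcal{L}$ to first order at $m^{(t)}$, (ii) is minimized over the centers exactly at $\Phi(m^{(t)})$, and (iii) dominates $\mathcal{L}$; then $\mathcal{L}(m^{(t+1)})\le Q(m^{(t+1)}\mid m^{(t)})\le Q(m^{(t)}\mid m^{(t)})=\mathcal{L}(m^{(t)})$, strictly unless $m^{(t)}$ is already stationary. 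The natural surrogate freezes the association weights at $w^{(t)}=w(m^{(t)})$ --- the step at which the \ac{RSRP}-type softmax structure becomes explicit --- with a correction term accounting for their $m$-dependence; arranging (ii) to reproduce the stated closed-form update is mechanical, but verifying the global domination (iii) is delicate, because the per-point terms $(\|x_i-m_l\|^2+\epsilon)^{-r/2}$ are bump-shaped, not convex, in the centers, so no global quadratic majorizer exists. I would get around this by working on the compact hull from (b), where $\nabla^2\mathcal{L}$ is uniformly bounded (finite because $\epsilon>0$), and proving a sufficient-decrease inequality there, possibly under the same mild conditions on $(p,q,\epsilon)$ used in (b) and consistent with the $\epsilon$-selection rules above; this is the one place the non-convexity genuinely bites.

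Given (a)--(c), the rest is routine. The sequence $\{\mathcal{L}^{(t)}\}$ is nonincreasing and bounded below, hence $\mathcal{L}^{(t)}\downarrow\mathcal{L}^{(\infty)}$; since the iterates lie in a compact set and $\mathcal{L}$ is a strict descent function for the continuous map $\Phi$, Zangwill's theorem gives that every limit point of $\{m^{(t)}\}$ is a fixed point of $\Phi$, i.e. satisfies $\nabla\mathcal{L}=0$, and all such limit points share the value $\mathcal{L}^{(\infty)}$. Because $\mathcal{L}$ is real-analytic on $\mathrm{conv}(\mathcal{X})^K$, the {\L}ojasiewicz gradient inequality upgrades this to convergence of the entire sequence $\{m^{(t)}\}$ to a single stationary point $m^{(\infty)}$; and a standard center-stable-manifold argument (using that $D\Phi$ is generically nonsingular at fixed points), together with the observation that a strict-descent iteration cannot converge to a strict saddle or to a local maximizer, shows that for all initializations outside a Lebesgue-null set $m^{(\infty)}$ is a local minimizer of $\mathcal{L}$. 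Hence $\mathcal{L}^{(\infty)}=\mathcal{L}(m^{(\infty)})$ corresponds to a local minimum of the WKHM distortion, which is the claim.
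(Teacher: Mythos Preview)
Your strategy is essentially the same as the paper's: both argue descent via a surrogate/majorization--minimization step, invoke boundedness below to get convergence of $\{\mathcal{L}^{(t)}\}$, and then identify the limit with a stationary point of $\mathcal{L}$. The paper's proof is a short sketch --- it simply asserts ``the updates minimize a surrogate of $\mathcal{L}$'' and that ``because of the non-convex nature of the problem, this stationary point is a local minimum'' --- so the crux you flag (verifying that the self-consistent fixed-point update actually yields descent, given that the per-term functions are bump-shaped rather than convex) is precisely the step the paper leaves unproven. Your additional machinery --- Zangwill for subsequential convergence of the iterates, {\L}ojasiewicz to upgrade to full-sequence convergence, and the center-stable-manifold argument to rule out strict saddles for generic initializations --- goes well beyond what the paper does and is in fact the correct way to justify the ``local minimum'' conclusion; the paper's one-line inference from non-convexity to local minimality is a non-sequitur, since stationary points of non-convex functions can perfectly well be saddles. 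In short, you have not missed anything relative to the paper; if anything, you have identified and proposed remedies for the gaps in its argument.
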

\begin{proof}
Let us consider the loss function at iteration $ t $, given the current cluster centers $ \{m_l^{(t)}\} $. The loss function is:
\begin{align}
\mathcal{L}^{(t)} = \sum_{i=1}^{N} \sum_{l=1}^{K} \frac{w_{il}^{(t)}}{(\|x_i - m_l^{(t)}\|^2 + \epsilon)^{p/2}},
\end{align}
where the weights are $w_{il}^{(t)} = \frac{(\|x_i - m_l^{(t)}\|^2 + \epsilon)^{-q/2}}{\sum_{j=1}^{K} (\|x_i - m_j^{(t)}\|^2 + \epsilon)^{-q/2}}$. In each iteration, the algorithm performs two steps: (1) compute $ w_{il}^{(t)} $, and (2) update each $ m_l^{(t+1)} $ using the weighted average, i.e.,
$m_l^{(t+1)} = \frac{\sum_{i=1}^{N} B_{il}^{(t)} x_i}{\sum_{i=1}^{N} B_{il}^{(t)}}$, where $ B_{il}^{(t)} $ are derived from the gradient of $ \mathcal{L} $ with respect to $ m_l $. Since the updates minimize a surrogate of $ \mathcal{L} $, and because $ \mathcal{L}^{(t)} \geq 0 $ for all $ t $, the sequence $ \{\mathcal{L}^{(t)}\} $ is non-increasing and bounded below, and hence converges, i.e., $\mathcal{L}^{(t)} \to \mathcal{L}^{(\infty)}$. The limit point satisfies the fixed-point update condition. Since each update step corresponds to solving $ \nabla_{m_l} \mathcal{L} = 0 $, the gradient of $ \mathcal{L} $ with respect to each $ m_l $ vanishes at convergence, implying a stationary point. Because of the non-convex nature of the problem, this stationary point is a local minimum.
\end{proof}

We now consider a stochastic setting where the centers $ \{m_l\} $ are drawn from a Binomial Point Process (BPP). Let $ \Phi = \{m_1, \dots, m_K\} \sim \text{BPP}(K, \Lambda) $, where $ \Lambda \subset \mathbb{R}^d $ is compact. The initial distortion function $ \mathcal{L}^{(t)}(\Phi) $ becomes a random variable over the space of realizations.
\begin{theorem}[Convergence in Probability]
Let $ \Phi = \{m_1, \dots, m_K\} \sim \text{BPP}(K, \Lambda) $, where $ \Lambda \subset \mathbb{R}^d $ is compact. Let $ \mathcal{L}^{(t)}(\Phi) $ be the distortion value after $ t $ iterations of the WKHM algorithm with initial centers $ \Phi $. Then for any $ \delta > 0 $ and $ \eta > 0 $, there exists $ T \in \mathbb{N} $ such that:
\begin{align}
\mathbb{P}\left( \left| \mathcal{L}^{(t+1)}(\Phi) - \mathcal{L}^{(t)}(\Phi) \right| > \delta \right) < \eta, \quad \forall t > T.
\end{align}
Thus, $ \mathcal{L}^{(t)}(\Phi) \xrightarrow{\mathbb{P}} \mathcal{L}^{(\infty)}(\Phi) $.
\end{theorem}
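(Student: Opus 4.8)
The plan is to derive the statement from Theorem~1 applied realization-by-realization, together with the elementary implication ``almost sure convergence $\Rightarrow$ convergence in probability.'' First I would note that a BPP$(K,\Lambda)$ on a set $\Lambda$ of positive finite Lebesgue measure places $K$ points i.i.d.\ uniformly on $\Lambda$, which are pairwise distinct with probability one; hence, on an event $\Omega_0$ of full probability, the initial configuration $\Phi$ satisfies the hypothesis of Theorem~1. For each $\omega\in\Omega_0$, Theorem~1 gives that $t\mapsto\mathcal{L}^{(t)}(\Phi(\omega))$ is non-increasing and bounded below by $0$, hence converges to a limit $\mathcal{L}^{(\infty)}(\Phi(\omega))$; in particular the increments $g_t(\Phi(\omega)):=\mathcal{L}^{(t)}(\Phi(\omega))-\mathcal{L}^{(t+1)}(\Phi(\omega))\ge 0$ tend to $0$.

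Next I would record two facts. (i) \emph{Measurability:} since $\epsilon>0$, every denominator in \eqref{eq:weights} and \eqref{eq:loss} is bounded below by $\epsilon$, so the one-step map $\Phi^{(t)}\mapsto\Phi^{(t+1)}$, and hence each $\mathcal{L}^{(t)}(\cdot)$, is a Borel function of the centers, and so is the pointwise limit $\mathcal{L}^{(\infty)}(\cdot)$; thus $g_t(\Phi)$, $\mathcal{L}^{(t)}(\Phi)$ and $\mathcal{L}^{(\infty)}(\Phi)$ are genuine random variables. (ii) \emph{Uniform bound:} because $(\|x_i-m_\ell\|^2+\epsilon)^{-p/2}\le\epsilon^{-p/2}$ and $\sum_\ell w_{i\ell}=1$, we have $0\le\mathcal{L}^{(t)}(\Phi)\le N\epsilon^{-p/2}$, and hence $0\le g_t(\Phi)\le N\epsilon^{-p/2}$, uniformly in $t$ and in $\Phi$.

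Then I would upgrade the pathwise statement to the claimed uniform-in-$t$ probability bound. Fix $\delta>0$ and set $E_t:=\{\sup_{s\ge t}g_s(\Phi)>\delta\}$; these events are nested decreasing and $\bigcap_t E_t\subseteq\{\limsup_s g_s(\Phi)\ge\delta\}$, which is null because $g_s\to 0$ on $\Omega_0$. By continuity of $\mathbb{P}$ from above, $\mathbb{P}(E_t)\downarrow 0$, so some $T$ satisfies $\mathbb{P}(E_T)<\eta$; then for all $t>T$, $\mathbb{P}(|\mathcal{L}^{(t+1)}(\Phi)-\mathcal{L}^{(t)}(\Phi)|>\delta)=\mathbb{P}(g_t(\Phi)>\delta)\le\mathbb{P}(E_t)\le\mathbb{P}(E_T)<\eta$, which is the asserted inequality. (Alternatively, the bound in (ii) lets one apply the bounded convergence theorem to get $\mathbb{E}[g_t]\to 0$ and then conclude via Markov's inequality.) Finally, since $t\mapsto\mathcal{L}^{(t)}(\Phi)$ is pathwise monotone and bounded it converges a.s.\ to $\mathcal{L}^{(\infty)}(\Phi)$, and a.s.\ convergence implies $\mathcal{L}^{(t)}(\Phi)\xrightarrow{\mathbb{P}}\mathcal{L}^{(\infty)}(\Phi)$.

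The probabilistic content is light; the real care is structural and is inherited from Theorem~1: one must ensure the iteration is well defined along almost every trajectory --- i.e.\ the update denominator $\sum_i B_{i\ell}^{(t)}$ never vanishes, which is nontrivial since $B_{i\ell}^{(t)}$ can be negative because of the correction term $-q\sum_j w_{ij}C_{ij}/(d_{i\ell}^2+\epsilon)$ --- and that the monotone-descent property of Theorem~1 holds for every admissible initialization. The role of $\epsilon>0$ is crucial here: it supplies the uniform bound $N\epsilon^{-p/2}$ and keeps all maps continuous, whereas compactness of $\Lambda$ is used only so that the BPP, and thus the underlying probability space, is well defined.
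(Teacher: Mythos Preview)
Your argument is correct and shares the same skeleton as the paper's proof: apply Theorem~1 realization-by-realization to get pointwise convergence $g_t(\Phi)\to 0$, then upgrade to the uniform-in-$t$ probability bound. The difference is in the upgrade step. The paper invokes Egorov's theorem on the probability space $(\Lambda^K,\mathbb{P})$ to obtain uniform convergence of $f_t$ outside a set of measure $<\eta$, and reads off the bound from that. You instead work directly with the decreasing events $E_t=\{\sup_{s\ge t}g_s>\delta\}$ and continuity of $\mathbb{P}$ from above (with bounded convergence plus Markov as a stated alternative). Your route is more elementary---no named theorem beyond monotone continuity of measure---and your explicit uniform bound $\mathcal{L}^{(t)}\le N\epsilon^{-p/2}$ and measurability discussion are spelled out more carefully than in the paper, which gestures at continuity/Lipschitzness. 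You also correctly observe that compactness of $\Lambda$ does no real analytic work here (the paper cites it for boundedness, but $\epsilon>0$ already supplies that), and your caveat about the update denominator $\sum_i B_{i\ell}^{(t)}$ possibly vanishing flags a genuine structural issue that neither Theorem~1 nor the paper's proof of Theorem~2 addresses.
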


\begin{proof}
For any fixed realization $ \Phi \in \Lambda^K $, Theorem 1 applies and guarantees that the sequence $ \mathcal{L}^{(t)}(\Phi) $ is non-increasing and converges to a finite limit. Since $ \Lambda $ is compact, all $ \Phi \in \Lambda^K $ lie in a bounded space, and each $ \mathcal{L}^{(t)} $ is a continuous function of the centers $ \{m_l\} $. Let us define
\begin{align}
f_t(\Phi) = \left| \mathcal{L}^{(t+1)}(\Phi) - \mathcal{L}^{(t)}(\Phi) \right|.
\end{align}
Because $ \mathcal{L}^{(t)} $ is continuous in $ \Phi $, and the updates are Lipschitz in $ m_l $ (due to the regularization $ \epsilon > 0 $), the functions $ f_t $ are measurable~\cite{heinonen2005lectures}. Moreover, for every $ \delta > 0 $, since $ f_t(\Phi) \to 0 $ for each $ \Phi $, by Egorov's theorem, the convergence is uniform outside a set of small measure~\cite{farris1983generalization}. Hence, for any $ \eta > 0 $, there exists $ T $ such that for all $ t > T $, $\mathbb{P}(f_t(\Phi) > \delta) < \eta$. This implies convergence in probability.
\end{proof}

\begin{theorem}[Almost Sure Convergence]
Under the setting of Theorem 2, assume additionally that there exists a constant $ \alpha > 0 $ such that for all $ \Phi $, $f_t(\Phi) \leq C e^{-\alpha t}$ for some constant $ C > 0 $. Then $\mathcal{L}^{(t)}(\Phi) \xrightarrow{\text{a.s.}} \mathcal{L}^{(\infty)}(\Phi)$.
\end{theorem}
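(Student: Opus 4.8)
The plan is to exploit the geometric-decay hypothesis directly: it makes the per-iteration increments summable with a bound that is \emph{uniform over all realizations} $\Phi$, which upgrades the per-$\Phi$ convergence already furnished by Theorem~1 to a deterministic (hence, a fortiori, almost sure) statement, requiring no probabilistic machinery beyond what is in place.

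First I would fix an arbitrary realization $\Phi\in\Lambda^K$. By Theorem~1, the sequence $\{\mathcal{L}^{(t)}(\Phi)\}$ is monotone non-increasing and bounded below by $0$, so its limit $\mathcal{L}^{(\infty)}(\Phi)$ exists; what remains is to control the rate. Using the telescoping identity $\mathcal{L}^{(t)}(\Phi)-\mathcal{L}^{(s)}(\Phi)=\sum_{k=s}^{t-1}\bigl(\mathcal{L}^{(k+1)}(\Phi)-\mathcal{L}^{(k)}(\Phi)\bigr)$ for $t>s$ together with $\bigl|\mathcal{L}^{(k+1)}(\Phi)-\mathcal{L}^{(k)}(\Phi)\bigr|=f_k(\Phi)\le Ce^{-\alpha k}$, one obtains
\[
\bigl|\mathcal{L}^{(t)}(\Phi)-\mathcal{L}^{(s)}(\Phi)\bigr|\;\le\;\sum_{k=s}^{\infty}Ce^{-\alpha k}\;=\;\frac{C}{1-e^{-\alpha}}\,e^{-\alpha s},
\]
which tends to $0$ as $s\to\infty$ uniformly in $t$. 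Hence $\{\mathcal{L}^{(t)}(\Phi)\}$ is Cauchy, and letting $t\to\infty$ yields the explicit tail bound $|\mathcal{L}^{(s)}(\Phi)-\mathcal{L}^{(\infty)}(\Phi)|\le \frac{C}{1-e^{-\alpha}}e^{-\alpha s}$. Since the right-hand side is independent of $\Phi$, convergence holds for \emph{every} realization, i.e. surely, and in particular $\mathbb{P}\bigl(\mathcal{L}^{(t)}(\Phi)\to\mathcal{L}^{(\infty)}(\Phi)\bigr)=1$. An equivalent, more classical route is Borel--Cantelli: for any $\delta>0$, $Ce^{-\alpha t}>\delta$ holds only for the finitely many $t<\alpha^{-1}\log(C/\delta)$, so $\sum_{t}\mathbb{P}(f_t(\Phi)>\delta)<\infty$ and thus $\mathbb{P}(f_t(\Phi)>\delta\ \text{i.o.})=0$; combining $f_t(\Phi)\to 0$ a.s. with the monotone convergence of Theorem~1 gives the claim.

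I do not expect a genuine obstacle in the limit argument itself — the strength of the assumption does the work. The one point I would flag, ideally via a short remark, is the standing hypothesis $f_t(\Phi)\le Ce^{-\alpha t}$: such geometric contraction of the distortion increments is the expected behaviour once the iterates enter a neighbourhood of a non-degenerate fixed point $m^\star$, where the update map $m\mapsto \bigl(\sum_i B_{i\ell}\,x_i\bigr)/\bigl(\sum_i B_{i\ell}\bigr)$ is $C^1$ with Jacobian of spectral radius strictly below $1$; here the regularization $\epsilon>0$ is what keeps all denominators bounded away from $0$, so the map is smooth on all of $\Lambda^K$. Making $C$ and $\alpha$ explicit would require linearizing this map, which is precisely why it is cleanest to retain the exponential bound as a hypothesis and present the almost sure convergence as its immediate consequence.
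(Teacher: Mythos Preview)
Your argument is correct and in fact tighter than the paper's own. The paper proceeds via Borel--Cantelli: it defines $A_\delta^t=\{f_t(\Phi)>\delta\}$, bounds $\mathbb{P}(A_\delta^t)\le (C/\delta)\,e^{-\alpha t}$ (implicitly by Markov's inequality), sums these probabilities, and invokes Borel--Cantelli to conclude $f_t(\Phi)\to 0$ almost surely, then appeals to the monotonicity and boundedness from Theorem~1. You instead observe that the hypothesis is a \emph{deterministic} bound uniform in $\Phi$, so the telescoping increments are summable with a geometric tail and the conclusion upgrades to sure convergence with the explicit rate $|\mathcal{L}^{(s)}(\Phi)-\mathcal{L}^{(\infty)}(\Phi)|\le C(1-e^{-\alpha})^{-1}e^{-\alpha s}$; the Borel--Cantelli variant you sketch at the end is exactly the paper's route, except that you note the events $A_\delta^t$ are actually empty for $t$ large, which is sharper than the Markov bound the paper uses. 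What your approach buys is simplicity and a quantitative rate; what the paper's approach buys is robustness---its argument would survive essentially unchanged if the exponential bound held only in expectation or with high probability rather than pointwise, which aligns with the weaker summability condition flagged in the remark that follows the theorem.
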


\begin{proof}
Define the event
\begin{align}
A_\delta^t := \left\{ \Phi : \left| \mathcal{L}^{(t+1)}(\Phi) - \mathcal{L}^{(t)}(\Phi) \right| > \delta \right\}.
\end{align}
By the exponential decay assumption, $\mathbb{P}(A_\delta^t) \leq \frac{C}{\delta} e^{-\alpha t}$.
The sum over $ t $ is finite, i.e., $\sum_{t=1}^\infty \mathbb{P}(A_\delta^t) < \infty$.
Hence, by the Borel–Cantelli lemma~\cite{chung1952application},
\begin{align}
\mathbb{P}\left( A_\delta^t \text{ occurs infinitely often} \right) = 0.
\end{align}
This holds for all $ \delta > 0 $, which in turn implies
\begin{align}
\mathbb{P} \left( \left| \mathcal{L}^{(t+1)}(\Phi) - \mathcal{L}^{(t)}(\Phi) \right| \to 0 \right) = 1.   
\end{align}
Combined with monotonicity and boundedness of $ \mathcal{L}^{(t)}(\Phi) $, this yields convergence almost surely.
\end{proof}
{
\begin{rmk}
The exponential decay condition $f_t(\Phi) \le C e^{-\alpha t}$, $\alpha>0$, assumed in the original version of this theorem, is a sufficient but not necessary special case. 
In contrast, a condition $\sum_{t=1}^\infty f_t(\Phi) < \infty$ allows slower decays such as $f_t(\Phi) = O(t^{-(1+\delta)})$, $\delta>0$, thereby extending the applicability of the Theorem to a broader class of stochastic perturbations, including sub-exponential and polynomially decaying noise.
\end{rmk}
}

\section{Application to Wireless Communications: Node Placement and Signal-Fraction Association}
A key network operation is determining association rules between the \acp{UE} and the \acp{BS}. Current industry-standard rules follow \ac{RSSI} based association, which is often suboptimal in terms of the \ac{SINR}. \ac{RSSI}-based association, in case of homogeneous propagation environment results in an equivalent rule of nearest-base-station association. Such rules are insufficient when interference and topology evolve stochastically.

Consider a downlink wireless network consisting of $K$ base stations (BSs) located at positions $\{m_l\}_{l=1}^K \subset \mathbb{R}^2$. Let $\mathcal{X} = \{x_1, \dots, x_N\} \subset \mathbb{R}^2$ denote the positions of $N$ mobile users. The BSs are assumed to be deployed either deterministically or as a realization of a Binomial Point Process (BPP), representing randomly placed small cells or distributed antennas. Each BS transmits with equal power $P_0$, and the signal from BS $l$ received at user $i$ experiences path loss governed by the standard power-law model:
\begin{equation}
P_{il} = \frac{P_0}{(\|x_i - m_l\|^2 + \epsilon)^{\gamma/2}},
\label{eq:rx_power}
\end{equation}
where $\gamma > 2$ is the path-loss exponent and $\epsilon > 0$ is a regularization constant that accounts for minimum path loss due to hardware constraints or distance floors.


The power-based soft user association corresponds exactly to WKHM clustering with $q = \gamma$:
\[
w_{il}^{\text{WKHM}} = \frac{(\|x_i - m_l\|^2 + \epsilon)^{-\gamma/2}}{\sum_{j=1}^K (\|x_i - m_j\|^2 + \epsilon)^{-\gamma/2}}.
\]
Thus, solving a WKHM clustering problem over the user and BS locations is equivalent to computing soft user association based on received signal strengths.

First we demonstrate how the clustering with \ac{WKHM} dramatically changes with the number of data points and then we discuss its implications in wireless networks. We compare the cluster formations with $K = 3$ for \ac{WKHM}, \ac{KM}, along with fuzzy C-means clustering (FCM)~\cite{bezdek1984fcm}, and clustering based on Gaussian mixture models (GMM)~\cite{zhang2021gaussian}. Fig.~\ref{fig:cluster1} shows that for $N = 240$, all the algorithms form the same clusters. However, for $N = 330$, we see a considerable difference between the other algorithms and \ac{WKHM}. In order to have a balance between the distance to the {\it connected} centroid and other other centroids, the \ac{WKHM} displaces one centroid to the near proximity to the other.

\begin{figure}
    \centering
    \includegraphics[width=\linewidth, height = 0.5\linewidth]{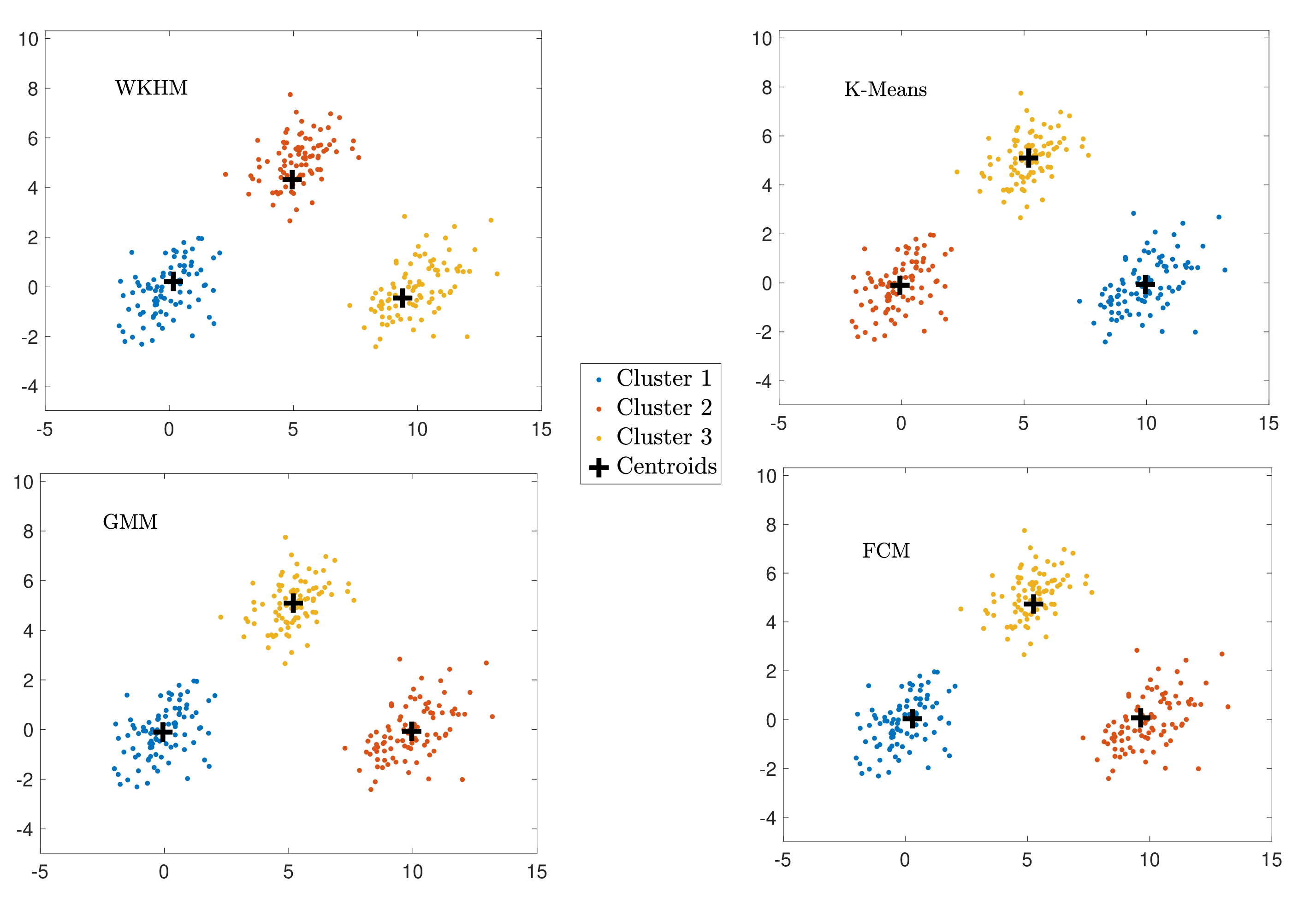}
    \caption{Comparison of cluster formations with $K = 3$ and $N = 240$.}
    \label{fig:cluster1}
\end{figure}

\begin{figure}
    \centering
    \includegraphics[width=\linewidth, height = 0.5\linewidth]{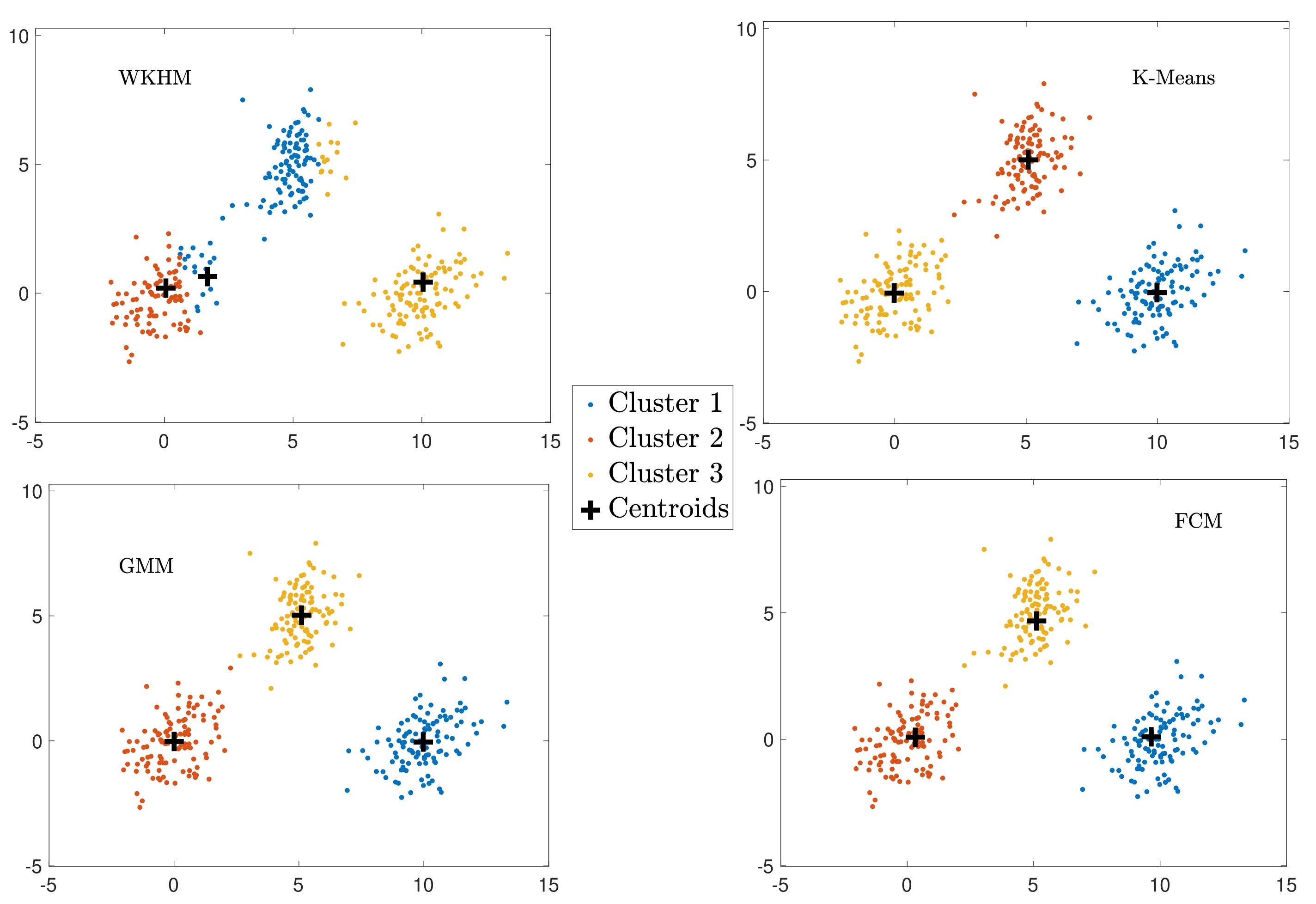}
    \caption{Comparison of cluster formations with $K = 3$ and $N = 330$.}
    \label{fig:cluster2}
\end{figure}

To elaborate on the implications of the above, we employ the algorithms for the joint \ac{RN} placement and \ac{UE} association problem. Apart from the \ac{KM}, \ac{CKM}, and \ac{KHM} algorithms we also investigate the performance of \ac{KC} for max-min fairness. In case the network designer intends to guarantee for the worst case \ac{UE} \ac{RSRP} experience, the \ac{KC} algorithm can be an attractive option. The \ac{KC} algorithm attempts to maximize the minimum distance between the RN centers and the \acp{UE}. The corresponding metric is
\begin{align}
    \mathcal{L}_{\rm KC} = \max_i \{ \min_l \{ ||x_i - m_l||^2\}\}.
    \label{eq:KC_metric}
\end{align}
Unfortunately, the \ac{KC} clustering problem is NP-HARD, and thus, it is unlikely that there can ever be efficient polynomial time exact algorithms. However, sub-optimal approximate algorithms exist for the problem. In our simulations, we use the algorithm presented in \cite{kleindessner2019fair} for solving the KC problem.



\begin{figure*}
   \subfloat[]{\includegraphics[width=0.33\linewidth, height = 0.2\linewidth]{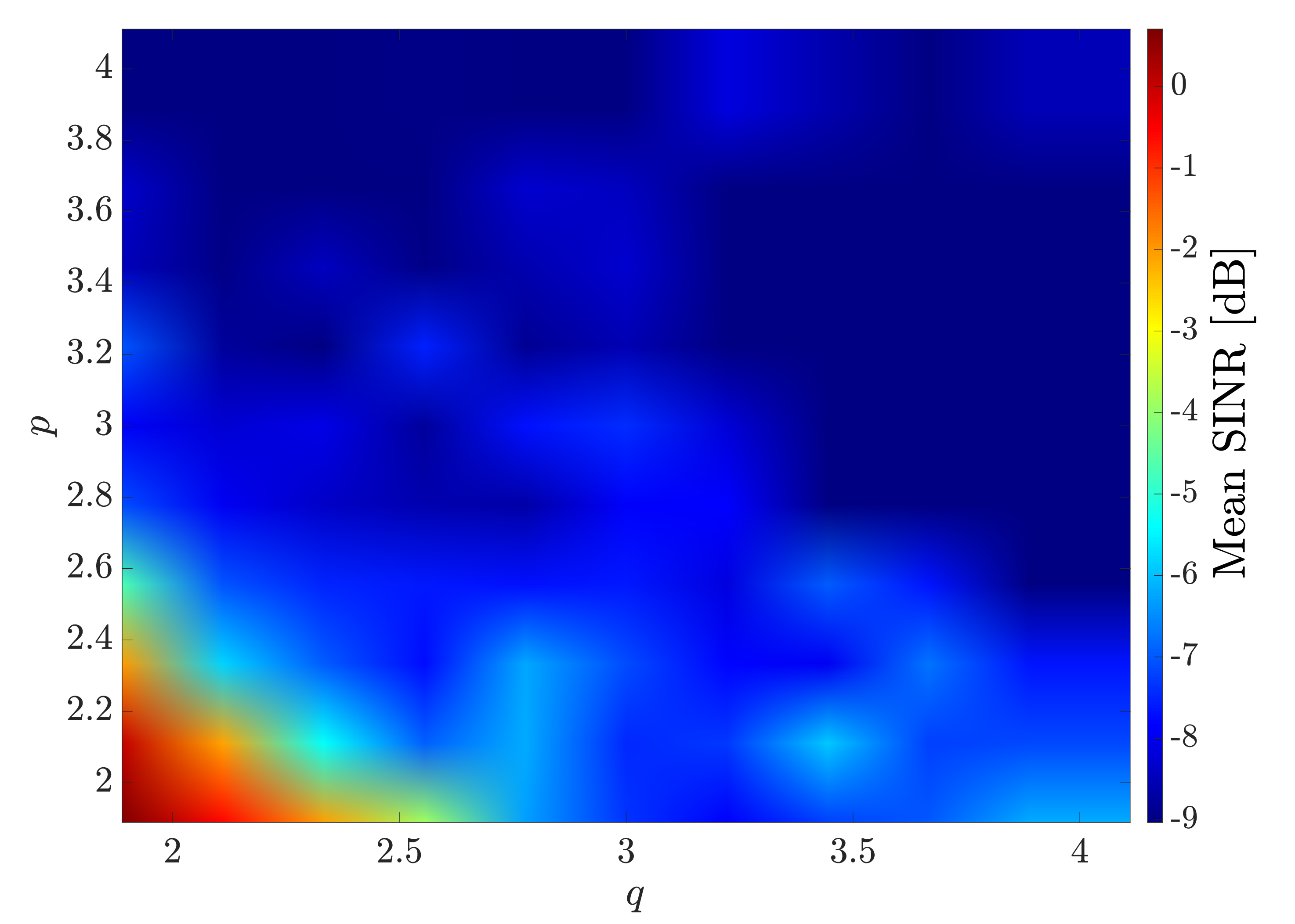}\label{fig:Mean_SINR_WCL}}
   \hfil
   \subfloat[]{\includegraphics[width=0.33\linewidth, height = 0.2\linewidth]{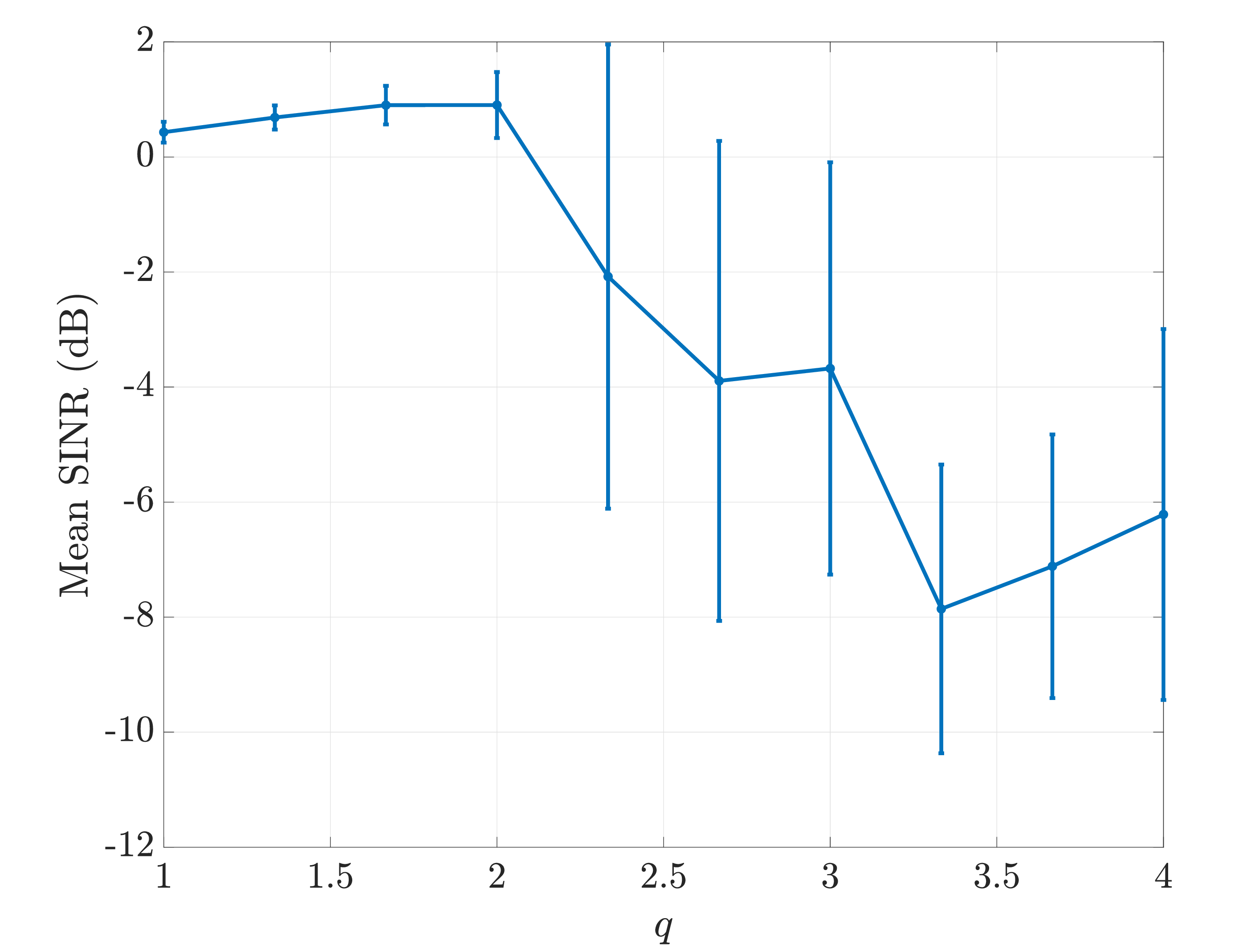}\label{fig:Abalation_q}}
   \hfil
   \subfloat[]{\includegraphics[width=0.33\linewidth, height = 0.2\linewidth]{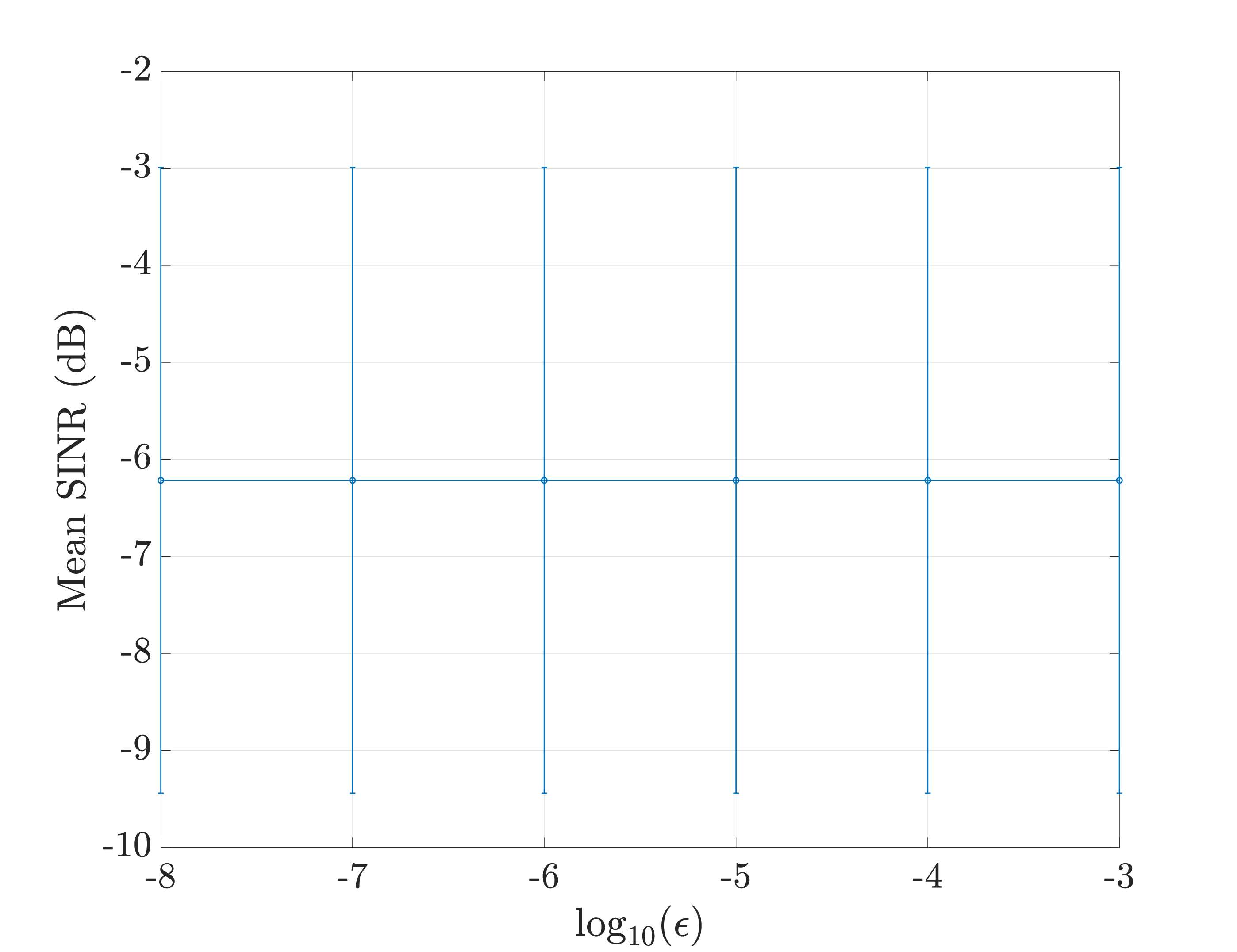}\label{fig:Epsilon_WCL}}
   \caption{Variation of performance with respect to $p$, $q$, and $\epsilon$: (a) SINR as a function of $p$ and $q$, (b) Mean SINR with respect to $q$ for a fixed $p = 2$, and (c) Mean SINR with respect to $\epsilon$ for fixed $p = 2$ and $q = 4$.}
   \label{fig:Ablation}
\end{figure*}
 {   Fig.~\ref{fig:Super} presents the empirical CDFs of three important user experience metrics - SNR, SINR, and cell load. Fig.~\ref{fig:SNR_WCL} shows that KM and KHM exhibit slightly higher SNR values in the upper tail; however, this improvement does not translate into higher SINR due to increased inter-cell interference near cell edges. As seen in Fig.~\ref{fig:SINR_WCL}, both KM and KHM produce heavier lower tails in the SINR distribution, indicating poorer performance for edge users. In contrast, WKHM's SINR CDF is consistently shifted rightward, implying fewer users experience low-SINR conditions and overall improved signal fidelity. It is important to note that the KC algorithm performs the best for the cell-center users (or high-SINR users) as compared to the WKHM, due to its worst user-centric clustering strategy leading to favorable SINR conditions for cell center users (as new centroids are placed away from them).
 The advantage of WKHM becomes more pronounced when examining the load distribution in Fig.~\ref{fig:Load_WCL}. While KC attains a favorable SINR profile, it leads to highly uneven cell loads, with several RNs serving a disproportionate number of UEs. KM and CKM exhibit the opposite behavior-uniform load distribution but relatively low SINR. 

 To elaborate on this further, {recall that the only source of randomness in the SNR/SINR statistics arises from the spatial position of the receiver, while fast fading is assumed to be averaged through control-channel measurements. Hence, the probabilistic success $\mathbb{P}(\mathrm{SINR}>\theta)$ is based solely on spatial variability.
Here, a key distinction from typical stochastic-geometry based works, e.g.,~\cite{5200994} is that our RR-based throughput model incorporates the effect of load through the clustering structure, which is essential in indoor deployments but not included in the throughput definition used in~\cite{5200994}.  

The throughput improvements under WKHM result from the combined effect of (i) improved worst-case SINR 
and (ii) more balanced load across RNs. From Fig.~\ref{fig:SINR_WCL}, WKHM achieves $\mathbb{P}(\mathrm{SINR}>\theta)=1$ at $\theta=0$ dB, whereas the other methods require approximately $\theta=-5$ dB. Additionally, Fig.~\ref{fig:Load_WCL} shows that KC exhibits severe load imbalance, with some cells serving more than 35 UEs, while WKHM keeps all cells between 9 and 13 users.   Since RR throughput scales inversely with load, these two effects jointly explain the higher throughput 
observed for the WKHM scheme.}

 
To substantiate this, in Table~\ref{tab:ttest}, WKHM demonstrates significantly higher mean throughput compared to KM ($p < 10^{-4}$), CKM ($p < 10^{-3}$), and KC ($p < 10^{-2}$), while also outperforming KHM with a marginal but consistent gain. The negative $t$-statistics indicate that the throughput of WKHM exceeds that of the other schemes across the realizations. These results confirm that the improved balance between signal strength (SNR), interference suppression (SINR), and load distribution translates into a statistically significant enhancement in user throughput. Consequently, WKHM achieves a superior trade-off among control-channel reliability, interference robustness, and fairness in resource allocation, leading to the best overall user experience.}

\begin{figure*}
   \subfloat[]{\includegraphics[width=0.33\linewidth]{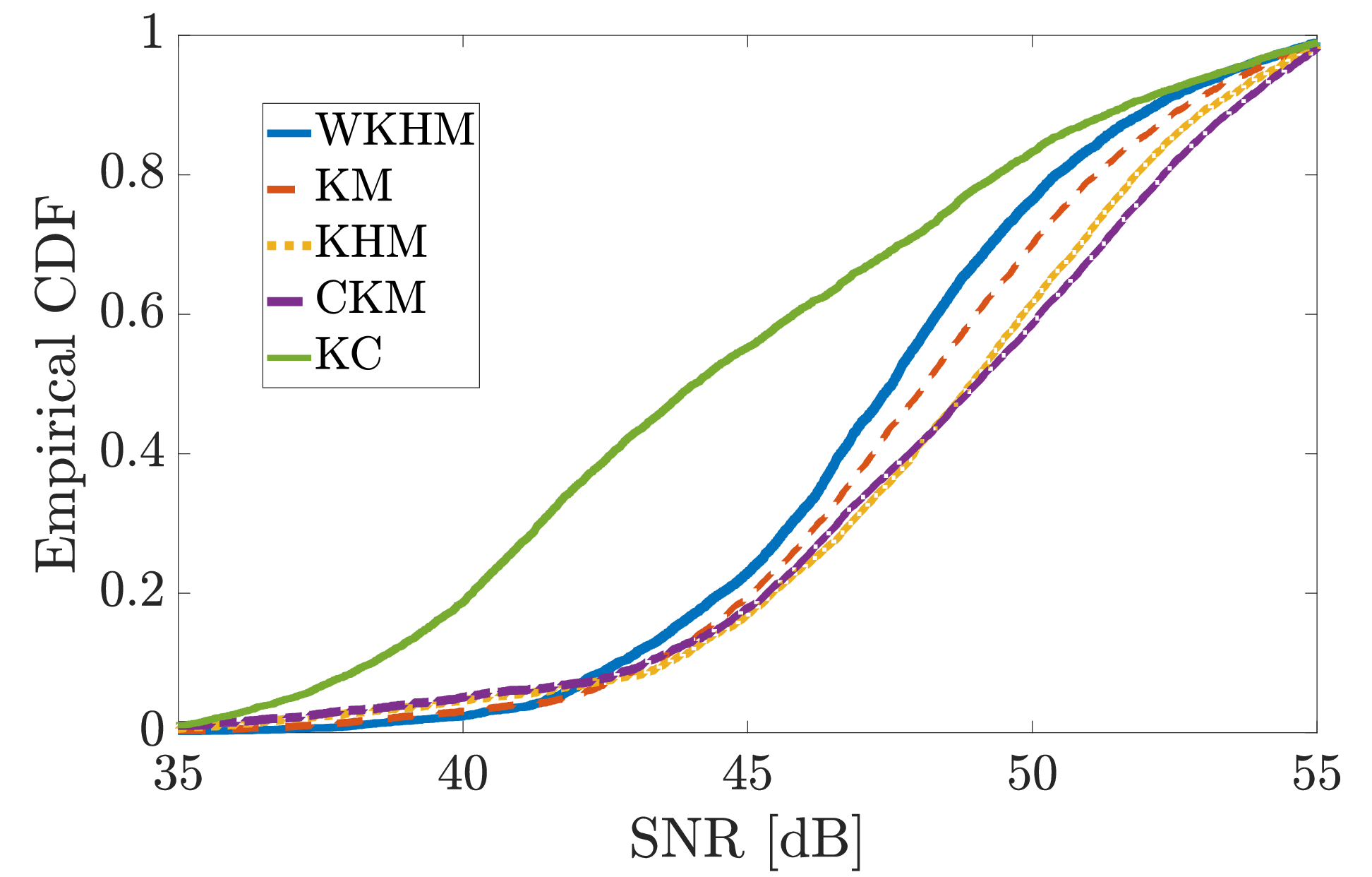}\label{fig:SNR_WCL}}
   \hfil
   \subfloat[]{\includegraphics[width=0.33\linewidth]{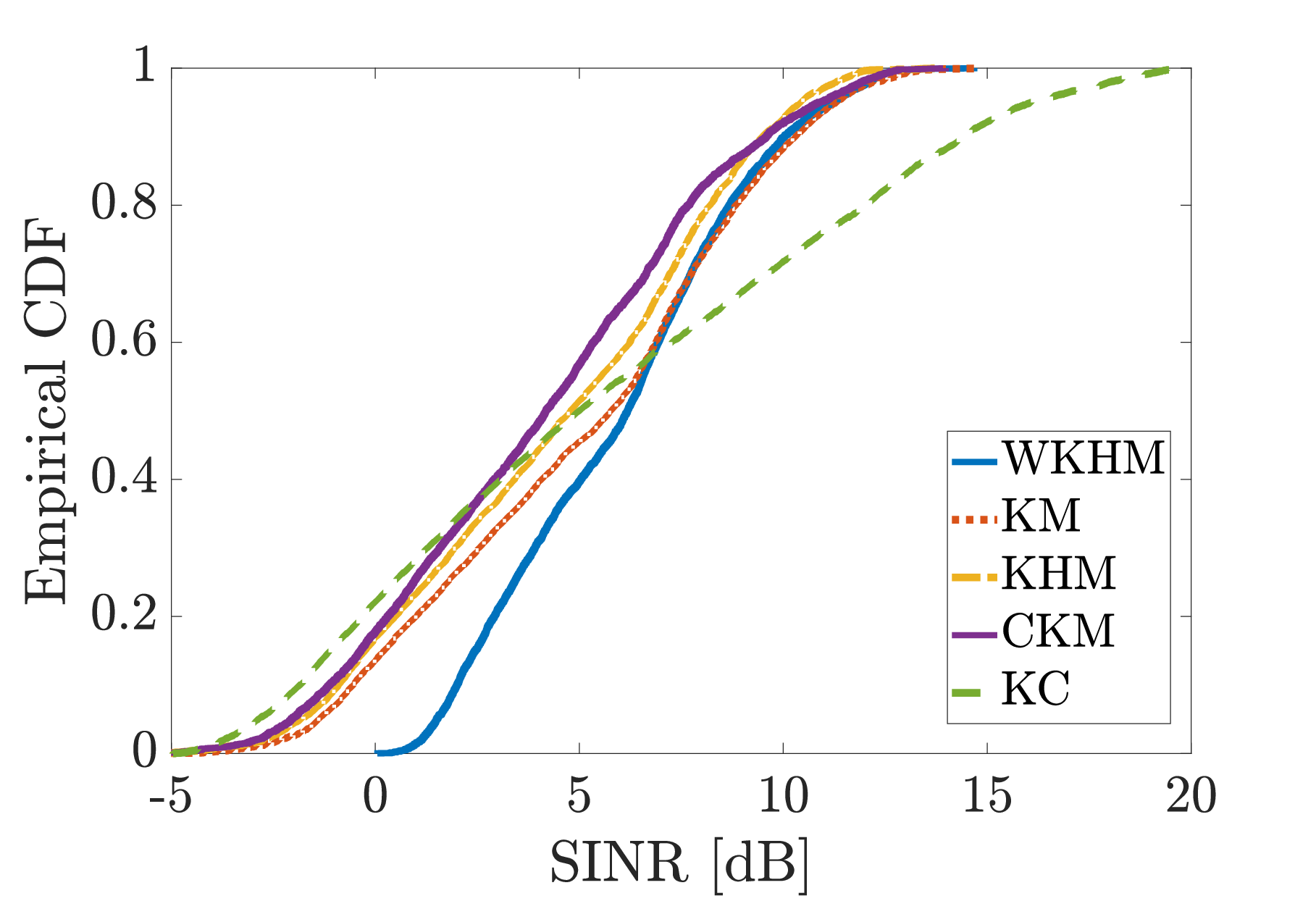}\label{fig:SINR_WCL}}
   \hfil
   \subfloat[]{\includegraphics[width=0.33\linewidth, height = 0.22\linewidth]{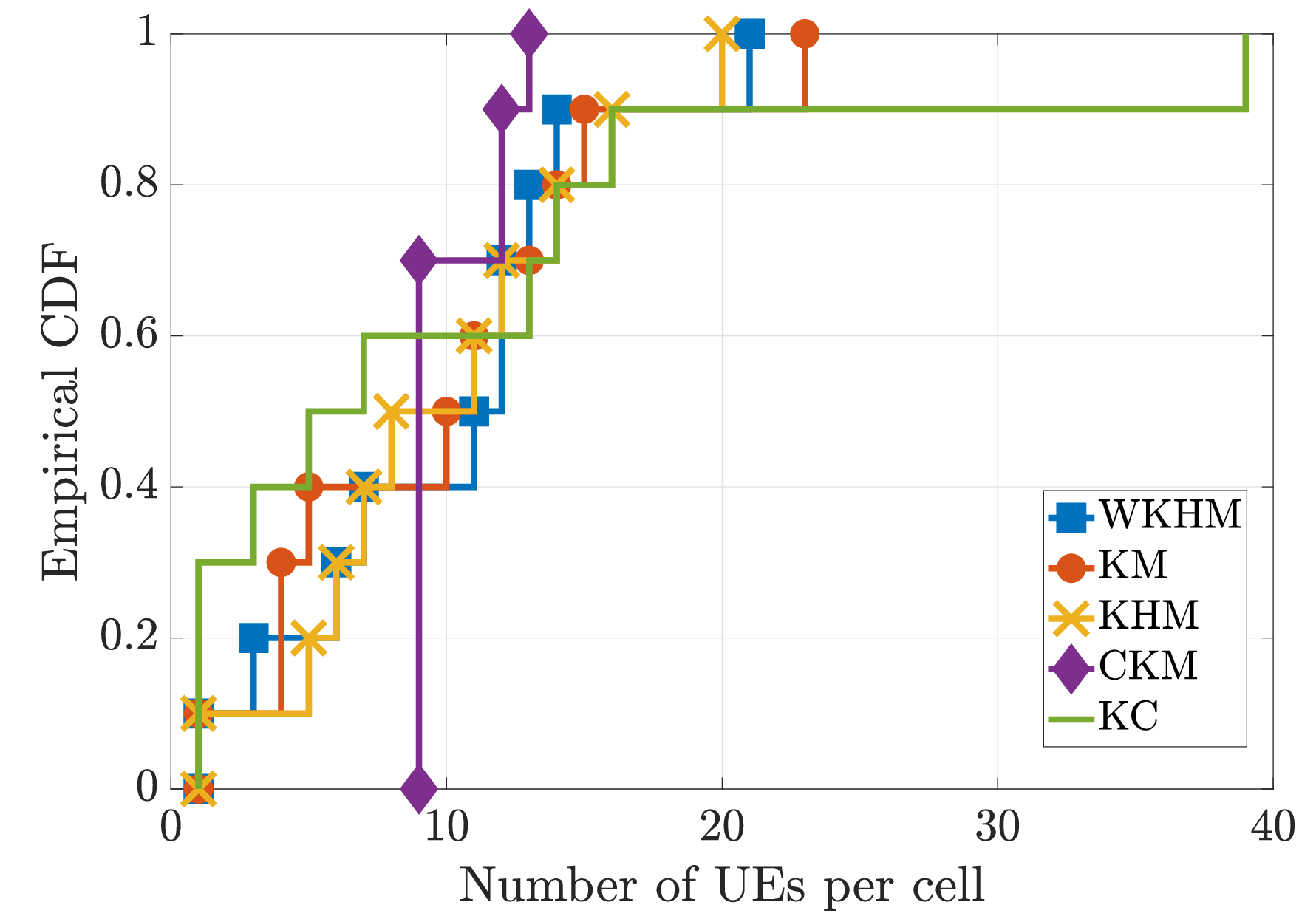}\label{fig:Load_WCL}}
   \caption{Empirical CDFs (a) SNR, (b) SINR, and (c) Cell load for different clustering schemes. Here the number of base stations is 9 in a 100 m x 100 m area with 100 users.}
    \label{fig:Super}
\end{figure*}

\begin{table}[t]
\centering
\begin{tabular}{lcc}
\hline
\textbf{Algorithm Pair} & \textbf{$t$-statistic} & \textbf{$p$-value} \\
\hline
KM vs WKHM   & $-4.72$ & $8.5\times10^{-5}$ \\
CKM vs WKHM  & $-3.14$ & $3.8\times10^{-3}$ \\
KHM vs WKHM  & $-2.01$ & $5.4\times10^{-2}$  \\
KC vs WKHM   & $-2.87$ & $7.1\times10^{-3}$ \\
\hline
\end{tabular}
\caption{Paired $t$-test comparison of mean per-user throughput between WKHM and other clustering schemes (two-tailed, $\alpha = 0.05$). Values are in Mbps.}
\label{tab:ttest}
\vspace{-0.3cm}
\end{table}

{{\bf Ablation Study:}
    Fig.~\ref{fig:Mean_SINR_WCL} shows that the mean SINR stabilizes and then deteriorates beyond $p \approx 2.5$, suggesting that the useful signal becomes too feeble as compared to the interference power. Fig.~\ref{fig:Abalation_q} shows that varying $q$ affects the sharpness of the weights: the performance improves as $q$ increases up to $2$, beyond which it deteriorates. This indicates that WKHM is sensitive to exact value of $q$ and accordingly, requires fine-tuning. This is intuitive since the purpose for the weights in WKHM is to emulate the signal fraction, which is best emulated with $q \approx p$. Fig.~\ref{fig:Epsilon_WCL} demonstrates that the performance remains nearly constant for $\epsilon \in [10^{-7},10^{-5}]$, confirming numerical stability in this range.
    }

\section{Conclusion}

We presented a comprehensive analysis of the regularized Weighted K-Harmonic Means (WKHM) algorithm that extends classical clustering approaches by incorporating soft assignment through inverse-distance weighting, along with regularization to ensure numerical stability. We rigorously proved that the algorithm converges to a local minimum under deterministic initialization of cluster centers. When the cluster centers are modeled as realizations of a Binomial Point Process (BPP), a common abstraction in stochastic geometry, we established convergence of the distortion sequence in probability and, with mild decay conditions on the per-iteration objective differences, also proved almost sure convergence. These results provide strong theoretical guarantees for the use of WKHM in random or data-driven environments. 

\bibliography{references.bib}
\bibliographystyle{ieeetr}

\end{document}